\renewcommand{\models}{\mid\joinrel=}
\renewcommand{\vdash}{\mid\joinrel\mathrel-}
\renewcommand{\emptyset}{\varnothing}
\newcommand{\shortcite}{\cite}
\title{A Semantic Tableau Method for Argument Construction}
\author{Nico Roos}
\authorrunning{N. Roos}
\institute{Data Science and Knowledge Engineering, Maastricht University
	\email{roos@maastrichtuniversity.nl} \\
	\url{https://dke.maastrichtuniversity.nl/nico.roos/}}
\begin{document}

\maketitle

\begin{abstract}
A semantic tableau method, called an argumentation tableau, that enables the derivation of arguments, is proposed. First, the derivation of arguments for standard propositional and predicate logic is addressed. Next, an extension that enables reasoning with defeasible rules is presented. Finally, reasoning by cases using an argumentation tableau is discussed.

\keywords{Semantic tableau \and Argumentation system \and Reasoning by cases}
\end{abstract}

\section{Introduction}
The semantic tableau method is used for (automated) reasoning with different logics such as the standard propositional and predicate logic \cite{Beth62}, several modal logics, 
description logics, etc. Although a semantic tableau proof can be viewed as an argument for a claim / conclusion, it is not similar to arguments studied in argumentation systems; see for instance: \cite{BesnardH05,BesnardH09,MP-14,Pol-91,Pol-92,PV-02,Prakken10,Roo-97,Roo-00,Sim-92,Vre-97,YunOC20}. This raises the question whether the semantic tableau method can be used to derive proper arguments for claims / conclusions. 

We will address this question by first investigating a semantic tableau method, called an argumentation tableau, for the derivation of arguments in standard propositional and predicate logic. The use of arguments becomes more interesting when dealing with defeasible information. That is, we consider information that need not be valid in the context of other information. We will address the handling of defeasible information, specifically propositional and predicate logic extended with defeasible rules. Defeasible rules are special rules without contraposition that allow for exceptions in specific situations.

Reasoning by cases is a problem for many argumentation systems that use an underlying language that allows for disjunctive information. Moreover, approaches that support reasoning by cases, do not agree on how rebutting attacks should be handled within a case \cite{BHS-17,Bodanza02,Pol-91,Pol-92,Roo-00}. We also will investigate reasoning by cases using an argumentation tableau. 

The remainder of the paper is organized as follows:
The next section introduces the argumentation system that will be used in the paper. Section \ref{sec:bat} presents the argumentation tableau for standard propositional and predicate logic. Section \ref{sec:defeasible-rules} describes an argumentation tableau for propositional and predicate logic extended with defeasible rules. Section \ref{sec:rbc} discusses reasoning by cases using an argumentation tableau, and Section \ref{sec:conclusion} concludes the paper. 

\section{Preliminaries}
This section presents the notion of an argument that will be used in the discussion of the argumentation tableau that is proposed in this paper. 

We assume a standard logic such as propositional or predicate logic. The language of the logic will be denoted by $\mathcal{L}$. We also assume that the language $\mathcal{L}$ contains the symbols $\top$ denoting \emph{true}, and $\bot$ denoting \emph{false}. In case of predicate logic, the set of ground terms is denoted by $\mathcal{G}$.

Since this paper focuses on argumentation, we need a definition of an argument. Toulmin \shortcite{Tou-58} views an argument as a support for some \emph{claim}. The support is grounded in \emph{data}, and the relation between the data and the claim is the \emph{warrant}. 
Here, we use the following definition.
\begin{definition}
A couple $A = (\mathcal{S},\varphi)$ is called an argument where
$\varphi$ is said to be its conclusion, and $\mathcal{S}$ is a set said to be its support; its elements are called supporting elements. 
It is worthwhile observing here that this definition is very general and a many couples might be qualified as arguments.
\end{definition}
In case of propositional and predicate logic, the support $\mathcal{S}$ is a set of propositions from the language $\mathcal{L}$. Generally, $\mathcal{S}$ contains the set of premises used to derive the supported proposition $\varphi$. So, $\mathcal{S} \vdash \varphi$. In special applications, such as Model-Based Diagnosis, 
we may restrict $\mathcal{S}$ to assumptions about the normal behavior of components.

We may extend a standard logic with a set of defeasible rules. Defeasible rules are of the form:
\[ \varphi \leadsto \psi \]
in case of propositional logic, and of the form:
\[ \varphi(\mathbf{x}) \leadsto \psi(\mathbf{x}) \]
in case of predicate logic. 
Here, $\varphi$ is propositions from the language $\mathcal{L}$, $\psi$ is either a proposition from the language $\mathcal{L}$ or a negated defeasible rule of the form: $\mathbf{not}(\eta \leadsto \mu)$, and $\mathbf{x}$ is a sequence of free variables. 
The free variables denote a set of ground instances of the defeasible rule $\varphi(\mathbf{x}) \leadsto \psi(\mathbf{x})$. We do not use the universal quantifier because the rule is not a proposition that belongs to the language $\mathcal{L}$. It is an additional statement about preferences that need not be valid for every ground instance. 

The defeasible rules $\varphi \leadsto \mathbf{not}(\eta \leadsto \mu)$ and $\varphi(\mathbf{x}) \leadsto \mathbf{not}(\eta(\mathbf{x}) \leadsto \mu(\mathbf{x}))$ are called \emph{undercutting defeaters} \cite{Pol-87}. These undercutting defeaters specify the conditions $\varphi$ and $\varphi(\mathbf{x})$ under which the defeasible rules $\eta \leadsto \mu$ and $\eta(\mathbf{x}) \leadsto \mu(\mathbf{x})$ respectively, are not applicable. 

We use $\Sigma \subseteq \mathcal{L}$ to denote the set of available information and we use $D$ to denote the set of available rules. 
Moreover, we use $\overline{D} = \{ \varphi(\mathbf{t}) \leadsto \psi(\mathbf{t}) \mid \varphi(\mathbf{x}) \leadsto \psi(\mathbf{x}) \in D, \mathbf{t} \in \mathcal{G}^n \}$ to denote the set of ground instances of the defeasible rules with $n$ free variables in case of predicate logic, and $\overline{D}=D$ in case of propositional logic.

Defeasible rules are used in the construction of arguments. Whenever we have a support $\mathcal{S}'$ for the antecedent $\varphi$ of a defeasible rule $\varphi \leadsto \psi$, we can create a supporting element $(\mathcal{S}',\varphi \leadsto \psi)$, which can be used to support $\psi$. The  arguments that can be constructed are defined as:
\begin{definition} \label{def:arguments}
Let $\Sigma \subseteq \mathcal{L}$ be the initial information and let $D$ be a set of defeasible rules. An argument $A = (\mathcal{S},\psi)$ with premises $\bar{A}$, defeasible rules $\tilde{A}$, last defeasible rules $\vec{A}$, supported proposition (claim / conclusion) $\hat{A}$, and supporting propositions $\hat{\mathcal{S}}$ of $\hat{A}$, is recursively defined as:
	\begin{itemize}
		\item 
		If $\psi \in \Sigma$, then $A=(\{\psi\},\psi)$ is an argument.
		
		$\bar{A} = \{\psi\}$. \quad
		$\tilde{A} = \emptyset$. \quad
		$\hat{A} = \psi$. \quad
		$\hat{\mathcal{S}} = \{\psi\}$.
		\item 
		If $A_1=(\mathcal{S}_1,\varphi_1), \ldots, A_k=(\mathcal{S}_k,\varphi_k)$ are arguments and $\{ \varphi_1, \ldots, \varphi_k \} \vdash \psi$, then $A=(\mathcal{S}_1 \cup \cdots \cup \mathcal{S}_k,\psi)$.
		
		$\bar{A} = \bar{A}_1 \cup \cdots \cup \bar{A}_k$. \quad
		$\tilde{A} = \tilde{A}_1 \cup \cdots \cup \tilde{A}_k$. \quad
		$\vec{A} = \vec{A}_1 \cup \cdots \cup \vec{A}_k$. \quad
		$\hat{A} = \psi$. \quad
		$\hat{\mathcal{S}} = \hat{\mathcal{S}}_1 \cup \cdots \cup \hat{\mathcal{S}}_k$.
		\item 
		If $A'=(\mathcal{S}',\varphi)$ is an argument and $\varphi \leadsto \psi \in \overline{D}$ is a defeasible rule, then $A=( \{(\mathcal{S}',\varphi \leadsto \psi)\}, \psi)$ is an argument.
		
		$\bar{A} = \bar{A}'$. \quad
		$\tilde{A} = \{\varphi \leadsto \psi\} \cup \tilde{A}'$. \quad
		$\vec{A} = \{\varphi \leadsto \psi\}$. \quad
		$\hat{A} = \psi$. \quad
		$\hat{\mathcal{S}} = \{\psi\}$.
	\end{itemize}
	$A = (\mathcal{S},\psi)$ is a \emph{minimal} argument iff (1) $\mathcal{S}$ is a minimal set such that $\hat{\mathcal{S}} \vdash \psi$, and (2) for every $(\mathcal{S}',\alpha \leadsto \beta) \in \mathcal{S}$, $(\mathcal{S}',\alpha)$ is a minimal argument.
\end{definition}

This abstract representation of arguments is based on the representation of arguments proposed in \cite{Roo-97,Roo-00}. 
Note that for every argument, there exists a corresponding minimal argument supporting the same conclusion.

We will use a graphical representation of an argument for human readability. The argument for an inconsistency:
\[ A= \begin{array}[t]{l}(\{ (\{ (\{ p \vee q, \neg q \}, p \leadsto r), (\{s\},s \leadsto t) \}, r \wedge t \leadsto u), \\ (\{v\},v \leadsto w), \neg (u \wedge w) \}, \bot) \end{array} \]
is graphically represented as:
\[ A: \left. \begin{array}{r}
\left. \begin{array}{r}
\left. \begin{array}{r} p\vee q \\ \neg q \end{array} \right|\hspace{-5pt}- p \leadsto r \\ \ \\
s \vdash s \leadsto t \end{array} \right|\hspace{-5pt}- r \wedge t \leadsto u \\
v \vdash v \leadsto  w \\
\neg (u \wedge w) \end{array} \right|\hspace{-5pt}- \bot
\]
Here, $\hat{A}=\bot$, $\vec{A}=\{ r \wedge t \leadsto u, v \leadsto w \}$, $\tilde{A} = \{ p\leadsto r, s \leadsto t, r \wedge t \leadsto u, v \leadsto w \}$, $\bar{A}=\{ p\vee q, \neg q, s, v, \neg (u \wedge w) \}$ and $\hat{\mathcal{S}} = \{ u, w, \neg (u \wedge w) \}$ with $A=(\mathcal{S},\bot)$.

When an argument for an inconsistency is derived\footnote{Arguments for inconsistencies cover rebutting attacks.}, one of the defeasible rules is not applicable in the current context. If no defeasible rule is involved in the argument for the inconsistency, one of the premises is invalid. In both cases we will use a strict partial order $<$ on the defeasible rules $D$ and on the information in $\Sigma$ to determine the rule and premise that is invalid, respectively. Following \cite{Roo-89a,Roo-92,Roo-97,Roo-00}, we formulate an \emph{undercutting} argument for the culprit. That is, an argument attacking every argument that uses the culprit.\footnote{Note the difference between an undercutting argument and an undercutting defeater. The former is an argument for not using a proposition or a defeasible rule, and the latter is a defeasible rule specifying a condition under which another defeasible rule should not be used \cite{Pol-87}.}
\begin{definition}
Let $A=(\mathcal{S}, \bot)$ be an argument for an inconsistency. Moreover, let $< \; \subseteq (\Sigma \times \Sigma) \cup (D \times D)$  be a strict partial order over the information $\Sigma$ and over the defeasible rules $D$. Finally, let $A'=(\mathcal{S}', \mathbf{not} (\varphi \leadsto \psi))$ and $A'=(\mathcal{S}', \mathbf{not} (\sigma))$ denote the arguments for an undercutting attack of a defeasible rule in $\overline{D}$ and a proposition in $\Sigma$ respectively.
\begin{itemize}
\item
If $\tilde{A} \not= \emptyset$, \textbf{defeat the weakest last rule}.
For every $\varphi \leadsto \psi \in min_<(\vec{A})$ with $(\mathcal{S}'',\varphi \leadsto \psi ) \in \mathcal{S}$, $A' = (\mathcal{S} \backslash (\mathcal{S}'',\varphi \leadsto \psi ), \mathbf{not} (\varphi \leadsto \psi))$ is an undercutting argument of $\varphi \leadsto \psi \in D$.
\item
If $\tilde{A} = \emptyset$, \textbf{defeat the weakest premise}.
For every $\sigma \in min_<(\bar{A})$, $A' = (\mathcal{S} \backslash \sigma, \mathbf{not} (\sigma))$ is an undercutting argument of $\sigma \in \Sigma$.
\end{itemize}
\end{definition}
Note that $min_<(\cdot)$ need not be unique because $<$ is a strict partial order. Also note that $\mathcal{S} \backslash (\mathcal{S}',\varphi \leadsto \psi )$ is an argument for $\neg \psi$, and that $\mathcal{S} \backslash \sigma$ is an argument for $\neg \sigma$. 

The undercutting arguments define an attack relation over the arguments. We denote the attack relation over a set of arguments $\mathcal{A}$ by $\longrightarrow \ \subseteq \mathcal{A} \times \mathcal{A}$. An undercutting argument $A = (\mathcal{S}, \mathbf{not} (\varphi \leadsto \psi))$ attacks every argument $A'$ for which $\varphi \leadsto \psi \in \tilde{A}'$ holds.  Moreover, an undercutting argument $A = (\mathcal{S}, \mathbf{not}(\sigma) )$ attacks every argument $A'$ for which $\sigma \in \bar{A}'$ holds. We denote the attack of $A$ on $A'$ by $A \longrightarrow A'$. The set of all derived arguments $\mathcal{A}$ and the attack relation over the arguments $\longrightarrow \ \subseteq \mathcal{A} \times \mathcal{A}$ determine an instance of an argumentation framework $(\mathcal{A},\longrightarrow)$ as defined by Dung \cite{Dun-95}.
We can use one the semantics for argumentation frameworks to determine sets of valid arguments; i.e., the argument extensions. See for instance: \cite{Bar-05,Cam-06,CramerT19,Dun-95,Dun-07,DvorakG16,Roo-10,Ver-96}.

\section{Basic Argumentation Tableau} \label{sec:bat}
A semantic tableau method is a proof system developed by Beth \shortcite{Beth62}. In the modern version of the method, the semantic tableau  for propositional and predicate logic is a tree where each node is labeled by a set of propositions. The set of propositions that labels a node of the tree is satisfiable if and only if the set of propositions that labels one of its child nodes, is satisfiable. For convenience we will use $\Gamma$ to denote a node of the semantic tableau as well as the set of propositions that labels the node. 

We are interested in arguments, which are propositions and their supports. Therefore we introduce an \emph{argumentation tableau} of which each node $\Gamma$ is a set of arguments.

\begin{definition}
Let $\mathcal{T}$ be an argumentation tableau. $\mathcal{T}$ is a tree of which each node $\Gamma$ is of a set of arguments.
\end{definition}

The tableau rules of an argumentation tableau are similar to the rules of a traditional semantic tableau. The only difference is the supports for the propositions. In the remainder of the paper, we will focus on the tableaux for propositional and predicate logic. However, the results are not limited to these logic. The approach can also be applied to semantic tableaux for several modal logics \cite{Mas-00}, dynamic logic \cite{BaaderS01}, etc.  
The tableau rules for propositional logic arguments are:
\begin{center}
\begin{tabular}{cc}
$\displaystyle \frac{(\mathcal{S},\varphi \wedge \psi)}{(\mathcal{S},\varphi), (\mathcal{S},\psi)} $ &
$\displaystyle \frac{(\mathcal{S},\varphi \vee \psi)}{(\mathcal{S},\varphi) \mid (\mathcal{S},\psi)} $ \\
\\
$\displaystyle \frac{(\mathcal{S},\varphi \to \psi)}{(\mathcal{S},\neg \varphi) \mid (\mathcal{S},\psi)} $ &
$\displaystyle \frac{(\mathcal{S},\varphi \leftrightarrow \psi)}{(\mathcal{S},\varphi \to
\psi), (\mathcal{S},\psi \to \varphi)} $
\\
\\
$\displaystyle \frac{(\mathcal{S},\neg(\varphi \vee \psi))}{(\mathcal{S},\neg \varphi), (\mathcal{S},\neg\psi)} $ & 
$\displaystyle \frac{(\mathcal{S},\neg(\varphi \wedge \psi))}{(\mathcal{S},\neg\varphi) \mid (\mathcal{S},\neg \psi)} $ \\
\\ 
$\displaystyle \frac{(\mathcal{S},\neg(\varphi \to \psi))}{(\mathcal{S},\varphi), (\mathcal{S},\neg\psi)} $ & 
$\displaystyle \frac{(\mathcal{S},\neg(\varphi \leftrightarrow \psi))}{(\mathcal{S},\neg(\varphi \to \psi)) \mid (\mathcal{S},\neg(\psi \to \varphi))} $
\\
\\
$\displaystyle \frac{(\mathcal{S},\neg\neg\varphi)}{(\mathcal{S},\varphi)} $ &
$\displaystyle \frac{(\mathcal{S},\varphi), (\mathcal{S}',\neg\varphi)}{(\mathcal{S} \cup \mathcal{S}',\bot)} $
\end{tabular}
\end{center}
There are three aspects to note: 
\begin{itemize}
\item
The right rule on the last line specifies the support for the closure of a branch of the semantic tableau,
\item 
More than one support for the closure of a branch may be derived. Here, we are interested in every support for a branch closure. 
\item
For an element $(\mathcal{S},\varphi)$ of a tableau node, unlike an argument defined  by Definition \ref{def:arguments}, $\hat{\mathcal{S}} \vdash \varphi$ need not hold.
\end{itemize}

Four additional tableau rules are used for predicate logic. 

\begin{center}
\begin{tabular}{cc}
$\displaystyle \frac{(\mathcal{S}, \forall x \ \varphi)}{(\mathcal{S},\varphi[^x/_t])} $ &
$\displaystyle \frac{(\mathcal{S}, \exists x \ \varphi)}{(\mathcal{S},\varphi[^x/_c])}$
\\
\\
$\displaystyle \frac{(\mathcal{S},\neg(\forall x \ \varphi))}{(\mathcal{S},\neg \varphi[^x/_c])} $ & 
$\displaystyle \frac{(\mathcal{S},\neg(\exists x \ \varphi))}{(\mathcal{S},\neg\varphi[^x/_t])}$
\end{tabular}
\end{center}
Here, $t$ can be any term that occurs in the current node, and $c$ must be a new constant not yet occurring the current node of the argumentation tableau. Since $t$ can be any term that occurs in the current node, the corresponding rule can be applied more than once for the same proposition.

If an argumentation tableau closes, we can determine the support(s) for the closure.
\begin{definition} \label{def:closure}
Let an argumentation tableau $\mathcal{T}$ with $n$ leaf nodes: $\Lambda_1, \ldots, \Lambda_n$. 
\begin{itemize}
\item
The argumentation tableau is \emph{closed} iff for every leaf $\Lambda_i$ there is an argument $(\mathcal{S}_i,\bot) \in \Lambda_i$.
\item
A support for a tableau closure is defined as: \\ $\mathcal{S} = \bigcup_{i=1}^n \mathcal{S}_i$ where $(\mathcal{S}_i,\bot) \in \Lambda_i$.
\end{itemize}
\end{definition}
Note that a leaf of a closed tableau may contain more than one argument of the form $(\mathcal{S}',\bot)$. Therefore, there can be multiple supports for the closure of the tableau. In order to determine every possible $(\mathcal{S}',\bot)$, the leafs of the closed tableau must also be saturated. A leaf node is \emph{saturated} if and only if there are no tableau rules that can be applied. It may be impossible to determine saturated leafs in case of predicate logic.

\begin{proposition} \label{prop:tableau-closure}
Let $\mathcal{L}$ be the language of propositional or predicate logic, let the $\Sigma \subseteq \mathcal{L}$, and let $\mathcal{T}$ be an argumentation tableau. 
Then, 
\begin{enumerate}
	\item
	If $\mathcal{S}$ is a support for the closure of the tableau $\mathcal{T}$ with root node $\Gamma_0 = \{ (\{ \sigma \}, \sigma) \mid \sigma \in \Sigma \}$, then $\mathcal{S} \subseteq \Sigma$ is inconsistent.
	\item
	If $\mathcal{S} \subseteq \Sigma$ is a minimal inconsistent set, then there exists a tableau $\mathcal{T}'$ which extends the tableau $\mathcal{T}$ and $\mathcal{S}$ is a support for the closure of $\mathcal{T}'$.
\end{enumerate}
\end{proposition}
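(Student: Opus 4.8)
The proposition is a soundness-and-completeness result for the support-tracking tableau, and the plan is to reduce both parts to the corresponding properties of the ordinary semantic tableau, the only extra work being the bookkeeping of supports. Throughout I would first record the trivial invariant that every argument $(\mathcal{S}',\varphi)$ occurring anywhere in a tableau with root $\Gamma_0=\{(\{\sigma\},\sigma)\mid\sigma\in\Sigma\}$ has $\mathcal{S}'\subseteq\Sigma$: this holds at the root and is preserved by every rule, since each rule either copies the support of its premise or forms a union of premise supports. In particular every closure support $\mathcal{S}=\bigcup_i\mathcal{S}_i$ satisfies $\mathcal{S}\subseteq\Sigma$, which already gives half of the first part.

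For the first part I would prove inconsistency of $\mathcal{S}$ by a model-chasing argument. Suppose, for contradiction, that some model $M\models\mathcal{S}$. I would construct a branch of $\mathcal{T}$ by descending from the root and, at every branching rule applied to an element $(\mathcal{S}_d,\delta)$ with $\mathcal{S}_d\subseteq\mathcal{S}$, following the child whose added disjunct is true in $M$ (at branchings whose premise support is not contained in $\mathcal{S}$, the choice is arbitrary). The invariant maintained along this path is: for every $(\mathcal{S}',\varphi)$ on the path with $\mathcal{S}'\subseteq\mathcal{S}$, $M\models\varphi$. This holds at the root (where $\mathcal{S}'=\{\sigma\}\subseteq\mathcal{S}$ and $M\models\mathcal{S}$) and is preserved by each rule, the disjunctive rules being exactly the reason the path-choice is made to agree with $M$; the quantifier rules are handled as usual, interpreting the fresh constant of the $\exists$-rule as a witness. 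Since $\mathcal{T}$ is closed, the leaf reached carries some $(\mathcal{S}_i,\bot)$, and as $\mathcal{S}_i\subseteq\mathcal{S}$ this closure is produced by the closure rule from a pair $(\mathcal{S}_a,\chi),(\mathcal{S}_b,\neg\chi)$ with $\mathcal{S}_a,\mathcal{S}_b\subseteq\mathcal{S}_i\subseteq\mathcal{S}$; the invariant then forces $M\models\chi$ and $M\models\neg\chi$, a contradiction. Hence $\mathcal{S}$ has no model.

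For the second part I would invoke completeness of the ordinary tableau together with the first part. Because the root elements $(\{\sigma\},\sigma)$ with $\sigma\in\mathcal{S}$ persist on every branch, I would extend $\mathcal{T}$ by systematically decomposing only these $\mathcal{S}$-supported elements on each leaf; since $\mathcal{S}$ is inconsistent, standard tableau completeness guarantees that this produces on every leaf a closure $(\mathcal{S}_i,\bot)$ with $\mathcal{S}_i\subseteq\mathcal{S}$. Reading off these closures (Definition \ref{def:closure}) gives a closure support $\mathcal{S}'=\bigcup_i\mathcal{S}_i\subseteq\mathcal{S}$ for the resulting $\mathcal{T}'$. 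By the first part $\mathcal{S}'$ is itself inconsistent; since $\mathcal{S}'\subseteq\mathcal{S}$ and $\mathcal{S}$ is a \emph{minimal} inconsistent set, $\mathcal{S}'=\mathcal{S}$, so $\mathcal{S}$ is a support for the closure of $\mathcal{T}'$.

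The routine parts are the induction that supports stay inside $\Sigma$ and the appeal to ordinary tableau completeness in the second part; note that the minimality hypothesis is used only through the clean subset-equality step, which piggybacks on the first part. The main obstacle is getting the invariant in the first part exactly right: it must be relativised to supports contained in $\mathcal{S}$ (so that branchings on formulas outside $\mathcal{S}$ do not break it) while remaining strong enough to fire at the closing pair, and the predicate-logic case needs the usual care so that the witness chosen for each $\exists$-introduced constant is compatible with $M$ along the entire descending path.
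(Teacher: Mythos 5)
Your proof is correct and follows essentially the same route as the paper: both parts reduce to the soundness and completeness of the ordinary semantic tableau with support bookkeeping, and your part~2 closes with the identical minimality step (a closure support $\mathcal{S}'\subseteq\mathcal{S}$ plus part~1 forces $\mathcal{S}'=\mathcal{S}$). The only cosmetic difference is that for part~1 the paper first deletes every argument whose support is not contained in $\mathcal{S}$ so as to obtain a standard closed tableau rooted at $\{(\{\sigma\},\sigma)\mid\sigma\in\mathcal{S}\}$ and then invokes tableau soundness, whereas you inline that same soundness argument directly by relativising the model-chasing invariant to the $\mathcal{S}$-supported elements.
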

\begin{proof} 
	We can prove that an interpretation entails the root of the tableau iff it entails all nodes on a branch from the root to a leaf. The proof is similar to the proof for a standard semantic tableau. We only have an argument $(\mathcal{S}_i,\bot)$ in a leaf node $\Lambda_i$ iff the branch containing $\Lambda_i$ closes. Therefore, the argumentation tableau is \emph{closed} iff for every leaf $\Lambda_i$ there is an argument $(\mathcal{S}_i,\bot) \in \Lambda_i$.
	
	\begin{enumerate}
		\item
		Let $\mathcal{S}$ be the support of the closure of the tableau $\mathcal{T}$. We can remove from every node, the arguments $(\mathcal{S}',\sigma)$ such that $\mathcal{S}' \not\subseteq \mathcal{S}$.
		This may result in some nodes $\Gamma$ having children that are all equal to $\Gamma$. 
		The following holds for the resulting tableau $\mathcal{T}'$: 
		\begin{itemize}
			\item 
			The tableau $\mathcal{T}'$ has a root $\Gamma'_0 = \{ (\{\sigma\}, \sigma) \mid \sigma \in \mathcal{S} \}$.
			\item 
			There is an interpretation entailing the root iff it entails every node on a branch from the root to a leaf.
			\item 
			The tableau $\mathcal{T}'$ still closes with $\mathcal{S}$ being the support of the closure.
		\end{itemize}
		Hence, $\mathcal{S}$ is an inconsistent set of propositions.
		
		\item 
		Let $\mathcal{S}$ be a minimal inconsistent subset of $\Sigma$. Then there exists a finite argumentation tableau $\mathcal{T}''$ that closes. We can extend the tableau $\mathcal{T}$ by replacing every leaf $\Lambda$ of $\mathcal{T}$ by $\mathcal{T}''$ after adding $\Lambda$ of every node in $\mathcal{T}''$. A rewriting step can occur twice in a branch of the resulting tableau. Since we normally do not have duplicate rewriting steps, we can remove the duplicate rewriting steps in $\mathcal{T}''$, and if a rewriting step resulted in two or more children, we can remove all branches except one. The resulting tableau is $\mathcal{T}'$.
		
		Since the tableau $\mathcal{T}''$ is closed, so is $\mathcal{T}'$. Next, we remove from every node, the arguments $(\mathcal{S}'',\sigma)$ such that $\mathcal{S}'' \not\subseteq \mathcal{S}$.
		This will result in some nodes $\Gamma$ having children that are all equal to $\Gamma$. Clearly, $\mathcal{T}'$ is still closed because of the extension of every leaf with $\mathcal{T}''$. The support $\mathcal{S}'$ of the closure satisfies $\mathcal{S}' \subseteq \mathcal{S}$. Since $\mathcal{S}$ is a minimal inconsistent set, according to the first item of his proposition, $\mathcal{S}' = \mathcal{S}$.
		\hspace*{\fill} $\Box$
	\end{enumerate}
\end{proof}

A standard semantic tableau uses refutation to prove a conclusion. The support $\mathcal{S}$ for a closure of an argumentation tableau can be used for the same purpose. Since $\mathcal{S}$ is inconsistent for any $\sigma \in \mathcal{S}$, $\mathcal{S} \backslash \sigma \models \neg\sigma$. Hence, to prove $\varphi$ and identify a corresponding argument, we add $(\{\neg\varphi\}, \neg\varphi)$ to the root $\Gamma_0$ of the tableau. If the tableau closes and if the support $\mathcal{S}$ of an inconsistency contains $\neg\varphi$, then we can construct an argument  $(\mathcal{S} \backslash \neg\varphi, \varphi)$. 
To keep track of the proposition we try to refute, we put a question-mark behind the proposition in the support $(\{\neg\varphi?\}, \neg\varphi)$. The element $ (\{\neg\varphi?\}, \neg\varphi)$ that we add to the root node, is called a \emph{test}. It is a special supporting element, which has not effect on the application of the tableau rules.
\begin{corollary}
	Let $\Sigma \subseteq \mathcal{L}$ be the initial information and let $\varphi \in \mathcal{L}$ be the proposition for which we search supporting arguments.
	Moreover, let $\mathcal{S}$ be the support for a tableau closure of a tableau $\mathcal{T}$ with root $\Gamma_0 = \{ (\{\sigma\}, \sigma) \mid \sigma \in \Sigma \} \cup (\{\neg\varphi?\}, \neg\varphi)$. 
	\begin{enumerate}
	\item
	If $\mathcal{S}$ is the support for a tableau closure of a tableau $\mathcal{T}$ and $\mathcal{S}$ contains a single test $\neg\varphi?$, then  $\mathcal{S} \backslash \neg\varphi? \vdash \varphi$. 
	\item
	If $\mathcal{S}' \subseteq \Sigma$ is a minimal set such that $\mathcal{S}' \vdash \varphi$, then there exists a tableau $\mathcal{T}'$ which extends the tableau $\mathcal{T}$ and $\mathcal{S}' \cup \{ \varphi? \}$ is a support of its closure.
	\end{enumerate}
\end{corollary}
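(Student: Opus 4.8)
The plan is to derive both parts directly from Proposition~\ref{prop:tableau-closure}, after observing that the \emph{test} $\neg\varphi?$ plays, for every purpose that matters to the tableau, exactly the role of the ordinary supporting element $(\{\neg\varphi\},\neg\varphi)$. By definition the question mark has no effect on the applicability of the tableau rules, so the tableau grown from $\Gamma_0 = \{ (\{\sigma\},\sigma) \mid \sigma\in\Sigma \} \cup (\{\neg\varphi?\},\neg\varphi)$ is, rule for rule, the tableau that the basic method produces from the information set $\Sigma\cup\{\neg\varphi\}$; the only difference is the bookkeeping that flags occurrences of $\neg\varphi$ inside a closure support with the marker ``$?$''. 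Consequently Proposition~\ref{prop:tableau-closure} applies verbatim with $\Sigma$ replaced by $\Sigma\cup\{\neg\varphi\}$, and I will use it in that form throughout.

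For the first part, let $\mathcal{S}$ be a support for the closure of $\mathcal{T}$ containing the single test $\neg\varphi?$. By part~1 of Proposition~\ref{prop:tableau-closure} (applied to $\Sigma\cup\{\neg\varphi\}$), the set $\mathcal{S}$, read as a set of propositions with the test counted as $\neg\varphi$, is inconsistent, and $\mathcal{S}\backslash\neg\varphi? \subseteq \Sigma$. Using the remark recorded just before the corollary, namely that $\mathcal{S}\backslash\sigma \models \neg\sigma$ for every $\sigma\in\mathcal{S}$, I take $\sigma=\neg\varphi$ and obtain $\mathcal{S}\backslash\neg\varphi? \models \neg\neg\varphi$, hence $\mathcal{S}\backslash\neg\varphi? \models \varphi$. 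By completeness of the logic this gives $\mathcal{S}\backslash\neg\varphi? \vdash \varphi$, which is the claim. The hypothesis that $\mathcal{S}$ contains a single test is exactly what guarantees $\neg\varphi\in\mathcal{S}$ and $\mathcal{S}\backslash\neg\varphi?\subseteq\Sigma$; without it, $\mathcal{S}$ could be an inconsistent subset of $\Sigma$ that never uses $\neg\varphi$.

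For the second part, let $\mathcal{S}'\subseteq\Sigma$ be minimal with $\mathcal{S}'\vdash\varphi$. I first show that $\mathcal{S}'\cup\{\neg\varphi\}$ is a minimal inconsistent subset of $\Sigma\cup\{\neg\varphi\}$. It is inconsistent because $\mathcal{S}'\vdash\varphi$. Removing a premise $\sigma\in\mathcal{S}'$ yields $(\mathcal{S}'\backslash\sigma)\cup\{\neg\varphi\}$, which is consistent because $\mathcal{S}'\backslash\sigma\not\vdash\varphi$ by the minimality of $\mathcal{S}'$; and removing $\neg\varphi$ leaves $\mathcal{S}'$, which is consistent in the principal case where the support $\mathcal{S}'$ of the argument is consistent. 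Hence $\mathcal{S}'\cup\{\neg\varphi\}$ is minimal inconsistent, and part~2 of Proposition~\ref{prop:tableau-closure} (again with $\Sigma$ replaced by $\Sigma\cup\{\neg\varphi\}$) yields a tableau $\mathcal{T}'$ extending $\mathcal{T}$ whose closure has support $\mathcal{S}'\cup\{\neg\varphi\}$, that is $\mathcal{S}'\cup\{\neg\varphi?\}$ once the test is tracked.

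The genuinely routine work is the appeal to Proposition~\ref{prop:tableau-closure}; the two places that need care are (i) making the ``test behaves as $\neg\varphi$'' reduction precise, i.e. checking that the marking propagates correctly so that a closure support contains $\neg\varphi?$ exactly when the corresponding basic support contains $\neg\varphi$, and (ii) the minimality bookkeeping in the second part. The main obstacle is the degenerate case of (ii): if $\mathcal{S}'$ is itself inconsistent then it is already a minimal inconsistent set and $\neg\varphi$ is redundant, so the stated closure support would not actually contain the test. This either has to be excluded by the standing assumption that an argument's support is consistent, or dealt with by a separate remark; with that caveat both implications follow.
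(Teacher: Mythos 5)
Your proposal is correct and takes essentially the route the paper intends: the paper states this corollary without an explicit proof, relying on the immediately preceding observation that the test $\neg\varphi?$ behaves exactly like the premise $\neg\varphi$, so that both items reduce to the two parts of Proposition~\ref{prop:tableau-closure} applied to $\Sigma\cup\{\neg\varphi\}$ --- which is precisely your argument. Your added caveat about the degenerate case where $\mathcal{S}'$ is itself inconsistent (so the closure support need not contain the test) is a legitimate refinement that the paper glosses over, not a flaw in your reasoning.
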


It can be beneficial if we can derive multiple conclusions simultaneously. The argumentation tableau offers this possibility by simply adding several tests to the root node. After deriving a support $\mathcal{S}$ for a tableau closure, we check whether the support contains multiple tests. If it does, it can be ignored. We are interested in supports with zero or one test. The latter provides arguments for conclusion of interest, and the former enables us to handle with inconsistencies in the initial information. For instance Roos \cite{Roo-88a,Roo-89a,Roo-92} proposes to resolve the inconsistencies by formulating undercutting arguments for the least preferred propositions in $\mathcal{S}$ given a partial preference order $<$ (which can be empty). 
\begin{definition}
	Let $\mathcal{S}$ be a support without tests for the tableau closure.

	For every $\sigma \in \min_< \mathcal{S}$, $(\mathcal{S} \backslash \sigma,\mathbf{not} \ \sigma)$ is an undercutting argument of $\sigma$. 
\end{definition}
Other names that can be found in the literature for this form of undercutting attack are: \emph{premise attack} and \emph{undermining} \cite{Prakken10}. The derivation of arguments for conclusions and undercutting arguments to resolve inconsistencies is related to \cite{BesnardH05,BesnardH09,DungKT09,Roo-92,Toni14}.

\section{Defeasible Rules} \label{sec:defeasible-rules}
The argumentation tableau presented in the previous section enables us to derive deductive arguments. It does not support arguments containing defeasible rules. Here, we will extend the argumentation tableau in order to derive arguments as defined in Definition \ref{def:arguments}. 

The support of the argument defined in Definition \ref{def:arguments} is a tree consisting of alternating deductive and defeasible transitions. The root of the tree is the conclusion / claim supported by the argument. For instance,
\[ A: \left. \begin{array}{r} p\vee q \\ \neg q \end{array} \right|\hspace{-5pt}- p \leadsto r \vdash r \leadsto s \vdash s \]

The support of the deductive transitions can be determined by the basic argumentation tableau described in the previous section by adding the antecedent of a defeasible rule as a test to the root of the argumentation tableau. Since we do not know which antecedents of defeasible rules will be supported, we add all of them as tests to the root $\Gamma_0$. 

Next, we extend every node of the tableau with the consequent of a defeasible rule after determining a support for its antecedent from a tableau closure. In the graphical representation of a tableau in Figures \ref{fig:arg-tableau} and \ref{fig:arg-tableau2}, this corresponds to extending the root of the tableau with the consequent of a defeasible rule after determining a support for its antecedent from a tableau closure. 
\begin{definition}
Let $\mathcal{T}$ be a tableau with root $\Gamma_0$.
Moreover, let $\mathcal{S}$ be the support for the antecedent $\varphi$ of the rule $\varphi \leadsto \psi \in \overline{D}$ determined by the tableau $\mathcal{T}$ where $(\{ \neg\varphi? \},\neg\varphi) \in \Gamma_0$. 

Then we extend every node $\Gamma$ of $\mathcal{T}$ with the argument $(\{ (\mathcal{S}, \varphi \leadsto \psi) \}, \psi)$.
\end{definition}
To give an illustration, consider the initial information $\Sigma = \{ p \vee q, \neg q \}$ and the defeasible rules $D = \{ p \leadsto r, r \leadsto s \}$. We are interested in an argument for the conclusion $s$. We start constructing the tableau shown on the left in Figure \ref{fig:arg-tableau}.
The support for the closure of this tableau is: $\{ p \vee q, \neg q, \neg p? \}$ implying the argument $(\{ p \vee q, \neg q \}, p)$. We may therefore add the consequence $r$ of the defeasible rule $p \leadsto r$ with the support $\{ (\{ p \vee q, \neg q \}, p \leadsto r) \}$ to the root of the tableau and continue rewriting the tableau. This results in the tableau shown on the right in Figure \ref{fig:arg-tableau}.

\begin{figure}
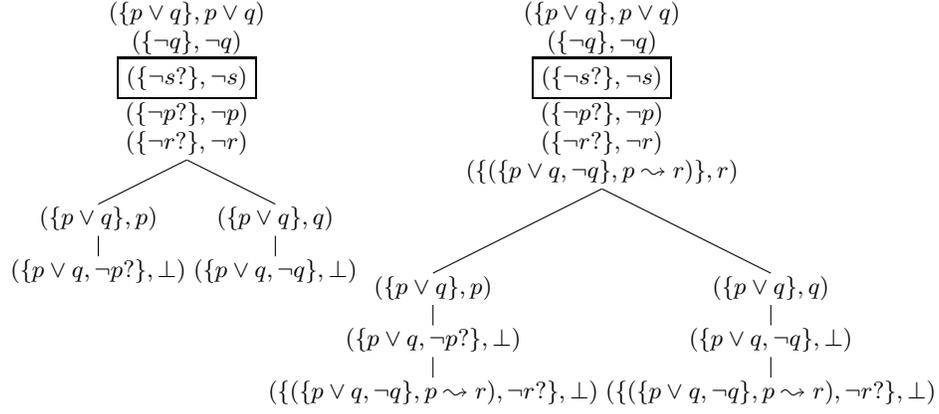

\begin{tabular}{cc}
\mbox{
\hspace*{-4mm}
\footnotesize
 \Tree
 [.{$(\{ p \vee q \}, p \vee q)$ \\
    $(\{ \neg q \}, \neg q)$ \\
    \fbox{$(\{\neg s? \}, \neg s)$} \\
    $(\{\neg p? \}, \neg p)$ \\
    $(\{\neg r? \}, \neg r)$}
    [.{$(\{ p \vee q \}, p )$}
       [.{$(\{ p \vee q, \neg p? \}, \bot )$}
       ]
    ]
    [.{$(\{ p \vee q \}, q)$}
       [.{$(\{ p \vee q, \neg q \}, \bot )$}
       ]
    ]
 ] }
&
\mbox{
\hspace*{-15mm}
\footnotesize
\Tree
 [.{$(\{ p \vee q \}, p \vee q)$ \\
    $(\{ \neg q \}, \neg q)$ \\
    \fbox{$(\{\neg s? \}, \neg s)$} \\
    $(\{\neg p? \}, \neg p)$ \\
    $(\{\neg r? \}, \neg r)$ \\
    $(\{ (\{ p \vee q, \neg q \}, p \leadsto r) \}, r)$}
    [.{$(\{ p \vee q \}, p )$}
       [.{$(\{ p \vee q, \neg p? \}, \bot )$}
          [.{$(\{ (\{ p \vee q, \neg q \}, p \leadsto r), \neg r? \}, \bot)$}
          ]
       ]
    ]
    [.{$(\{ p \vee q \}, q)$}
       [.{$(\{ p \vee q, \neg q \}, \bot )$}
          [.{$(\{ (\{ p \vee q, \neg q \}, p \leadsto r), \neg r? \}, \bot)$}
          ]
       ]
    ]
 ] }
\end{tabular}
\normalsize
\caption{Deriving defeasible arguments 1}
\label{fig:arg-tableau}
\end{figure}
 
\noindent
The support for the new closure of the tableau shown on the right in Figure  \ref{fig:arg-tableau} is: $\{ (\{ p \vee q, \neg q \}, p \leadsto r), \neg r? \}$ implying the argument $(\{ (\{ p \vee q, \neg q \}, p \leadsto r) \}, r)$. We may therefore add the consequence $s$ of the defeasible rule $r \leadsto s$ with the support $\{ (\{ (\{ p \vee q, \neg q \}, p \leadsto r) \}, r \leadsto s) \}$ to the root of the tableau and continue rewriting the resulting tableau as shown in Figure \ref{fig:arg-tableau2}.  The support for the closure of the tableau as shown in Figure \ref{fig:arg-tableau2} is: 
\[ \{ (\{ (\{ p \vee q, \neg q \}, p \leadsto r) \}, r \leadsto s), \neg s? \} \] 
implying the desired argument:
\[ (\{ (\{ (\{ p \vee q, \neg q \}, p \leadsto r) \}, r \leadsto s) \}, s) \]

\begin{figure}
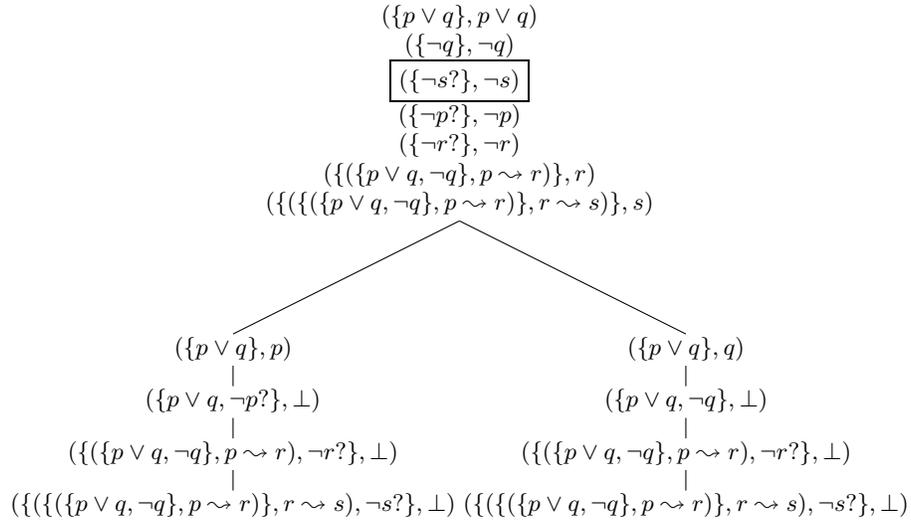

\hspace*{-9mm}
\footnotesize
\hspace*{5mm}
\Tree
 [.{$(\{ p \vee q \}, p \vee q)$ \\
    $(\{ \neg q \}, \neg q)$ \\
    \fbox{$(\{\neg s? \}, \neg s)$} \\
    $(\{\neg p? \}, \neg p)$ \\
    $(\{\neg r? \}, \neg r)$ \\
    $(\{ (\{ p \vee q, \neg q \}, p \leadsto r) \}, r)$ \\
    $(\{ (\{ (\{ p \vee q, \neg q \}, p \leadsto r) \}, r \leadsto s) \}, s)$}
    [.{$(\{ p \vee q \}, p )$}
       [.{$(\{ p \vee q, \neg p? \}, \bot )$}
          [.{$(\{ (\{ p \vee q, \neg q \}, p \leadsto r), \neg r? \}, \bot)$}
             [.{$(\{ (\{ (\{ p \vee q, \neg q \}, p \leadsto r) \}, r \leadsto s), \neg s? \}, \bot)$}
             ]
          ]
       ]
    ]
    [.{$(\{ p \vee q \}, q)$}
       [.{$(\{ p \vee q, \neg q \}, \bot )$}
          [.{$(\{ (\{ p \vee q, \neg q \}, p \leadsto r), \neg r? \}, \bot)$}
             [.{$(\{ (\{ (\{ p \vee q, \neg q \}, p \leadsto r) \}, r \leadsto s), \neg s? \}, \bot)$}
             ]
          ]
       ]
    ]
 ] 
\normalsize
\caption{Deriving defeasible arguments 2}
\label{fig:arg-tableau2}
\end{figure}

\subsection{Predicate Logic}
The construction of an argumentation tableau for predicate logic extended with defeasible rules is the same as the above described argumentation tableau for propositional logic with defeasible rules. We should in principle add every ground instance of the negated antecedent of each rule $\varphi(\mathbf{t}) \leadsto \psi(\mathbf{t}) \in \overline{D}$ as a test to the root of the tableau. That is, we should add the set of tests 
\[ \{ ( \{ \neg \varphi(\mathbf{t})? \}, \neg \varphi(\mathbf{t})) \mid \varphi(\mathbf{t}) \leadsto \psi(\mathbf{t}) \in \overline{D}, \mathbf{t} \in \mathcal{G} \}\] 
to the root of the tableau.  If functions are used, this set of tests will be infinite, and therefore adding all ground instances is not practically feasible. Instead, we may limit ourselves to the ground instances that are present in the current tableau. So, while expanding the tableau, more ground instance may be added.

\subsection{Correctness and Completeness}
We can proof that the argumentation tableau determines exactly the same set of arguments as those defined in Definition \ref{def:arguments}. First, we prove a proposition similar to Proposition \ref{prop:tableau-closure}
\begin{proposition} \label{prop:tableau-with-rules-closure}
	Let $\mathcal{L}$ be the language of propositional or predicate logic, let the $\Sigma \subseteq \mathcal{L}$, let $D$ be a set of defeasible rules over $\mathcal{L}$, and let $\Gamma_0 = \{ (S_i,\psi_i) \}^n_{i=1}$ be the root node of the tableau $\mathcal{T}$ and let $\Psi = \{ \psi \mid (S,\psi) \in \Gamma_0 \}$. 
	Then, 
	\begin{enumerate}
		\item
		If $\mathcal{S}$ is a support for the closure of the tableau $\mathcal{T}$, then $\hat{\mathcal{S}} \subseteq \Psi$ is inconsistent.
		\item
		If $\hat{\mathcal{S}} \subseteq \Psi$ is a minimal inconsistent set, then $\mathcal{S}$ is a support for the closure of the tableau $\mathcal{T}$.
	\end{enumerate}
\end{proposition}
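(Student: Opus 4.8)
The plan is to reduce the statement to Proposition~\ref{prop:tableau-closure} by observing that the tableau rules only ever inspect and decompose the \emph{propositional} component of an element $(\mathcal{S},\chi)$, while the support $\mathcal{S}$ is merely carried along unchanged---the single exception being the closure rule, which forms the \emph{union} of two supports. Consequently every first component occurring anywhere in $\mathcal{T}$ is a union $\bigcup_{j\in J} S_j$ of supports of root arguments $(S_j,\psi_j)\in\Gamma_0$, and the whole tableau projects onto a basic argumentation tableau $\mathcal{T}^\ast$ whose root is $\{(\{\psi_i\},\psi_i)\mid (S_i,\psi_i)\in\Gamma_0\}$: same tree shape, same branch decompositions, same closures, the only change being that each root support $S_i$ is replaced by the singleton $\{\psi_i\}$. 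Since $\hat{\cdot}$ maps each supporting element (a proposition $\sigma$, a test $\sigma?$, or a rule application $(\mathcal{S}',\alpha\leadsto\sigma)$) to its conclusion, and since every root argument produced by the construction has a singleton support with $\hat{S_i}=\{\psi_i\}$, the operator $\hat{\cdot}$ realises exactly this projection and distributes over unions. Proposition~\ref{prop:tableau-closure} then applies to $\mathcal{T}^\ast$ with $\Sigma$ instantiated by $\Psi$.

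For item~1, let $\mathcal{S}$ be a support for the closure of $\mathcal{T}$. By the union invariant above, $\mathcal{S}=\bigcup_{j\in J} S_j$ for some $J$, hence $\hat{\mathcal{S}}=\bigcup_{j\in J}\hat{S_j}=\{\psi_j\mid j\in J\}\subseteq\Psi$. Moreover $\hat{\mathcal{S}}$ is precisely the support for the closure of the projected tableau $\mathcal{T}^\ast$ obtained from $\mathcal{S}$ leaf by leaf. Applying Proposition~\ref{prop:tableau-closure}(1) to $\mathcal{T}^\ast$ (with root propositions $\Psi$) yields that $\hat{\mathcal{S}}\subseteq\Psi$ is inconsistent, which is the claim.

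For item~2, let $\hat{\mathcal{S}}\subseteq\Psi$ be minimal inconsistent. Applying Proposition~\ref{prop:tableau-closure}(2) to $\mathcal{T}^\ast$ produces an extension of $\mathcal{T}^\ast$ that closes with $\hat{\mathcal{S}}$ as its support. Lifting this extension back---replacing each occurrence of a root proposition $\psi_j$ by the corresponding root argument $(S_j,\psi_j)$ and propagating the real supports through the same rule applications---gives an extension of $\mathcal{T}$ that closes, and its support is exactly $\mathcal{S}=\bigcup_{j\in J} S_j$ with $\hat{\mathcal{S}}=\{\psi_j\mid j\in J\}$, as required. As in Proposition~\ref{prop:tableau-closure}, one may first have to extend $\mathcal{T}$ to the relevant saturated tableau; for predicate logic $\hat{\mathcal{S}}$ is finite by compactness, so a finite closing tableau exists.

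The main obstacle is making the projection $\mathcal{T}\mapsto\mathcal{T}^\ast$ and the accompanying support correspondence precise, rather than the logical content, which is inherited wholesale from Proposition~\ref{prop:tableau-closure}. Concretely one must verify, by induction on the construction of $\mathcal{T}$, two facts: that the defeasible-rule elements and the tests behave \emph{inertly}---no tableau rule decomposes them, so only their conclusions $\psi_i$ ever participate in a closure---and that every support arising in $\mathcal{T}$ is a union of root supports on which $\hat{\cdot}$ acts as a homomorphism onto the projected propositional supports. Once these bookkeeping invariants are in place, inconsistency of $\hat{\mathcal{S}}$ transfers in both directions and the two items follow immediately from the corresponding items of Proposition~\ref{prop:tableau-closure}.
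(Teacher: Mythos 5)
Your proposal is correct and takes essentially the same route as the paper: the paper's own proof consists of the single remark that, because $\hat{\mathcal{S}} \subseteq \mathcal{L}$, the argument is ``similar to the proof of Proposition~\ref{prop:tableau-closure}'', and your projection $\mathcal{T}\mapsto\mathcal{T}^\ast$ via $\hat{\cdot}$ (with the observation that supports are carried inertly and only unioned at closure) is precisely that reduction, spelled out in more detail than the paper provides.
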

\begin{proof}
	Since $\hat{\mathcal{S}}$ is a subset of $\mathcal{L}$, the proof is similar to the proof of Proposition \ref{prop:tableau-closure}.
	\hspace*{\fill} $\Box$
\end{proof}

\begin{theorem}
If $A$ is a minimal argument according to Definition \ref{def:arguments}, then $A$ can be derived by an argumentation tableau.
If the argument $A$ can be derived by an argumentation tableau, then $A$ is an argument according to Definition \ref{def:arguments}.
\end{theorem}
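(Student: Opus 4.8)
The plan is to establish the two implications separately, in both cases by induction on the nesting depth of defeasible rules in the support of $A$. The two tools driving the induction are Proposition~\ref{prop:tableau-with-rules-closure}, which ties a support $\mathcal{S}$ for a tableau closure to the minimal inconsistency of $\hat{\mathcal{S}}$, and the single-test refutation device of the Corollary following Proposition~\ref{prop:tableau-closure}: a support for a closure containing exactly one test $\neg\varphi?$ corresponds to a minimal deductive derivation $\mathcal{S}\backslash\neg\varphi? \vdash \varphi$. Throughout, the defeasible extension rule of this section is what converts such a support for an antecedent $\varphi$ into the nested supporting element $(\{(\mathcal{S},\varphi\leadsto\psi)\},\psi)$ appearing in Definition~\ref{def:arguments}.

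For the completeness direction (minimal argument $\Rightarrow$ derivable), I would argue as follows. In the base case $\tilde{A}=\emptyset$ the argument is purely deductive, so $A=(\mathcal{S},\psi)$ with $\mathcal{S}\subseteq\Sigma$ a minimal set satisfying $\mathcal{S}\vdash\psi$; adding the test $\neg\psi?$ to $\Gamma_0$ and invoking the Corollary yields a closed tableau whose closure has support $\mathcal{S}\cup\{\neg\psi?\}$, which is exactly $A$. For the inductive step, each defeasible supporting element $(\mathcal{S}',\varphi\leadsto\psi')\in\mathcal{S}$ carries a sub-argument $(\mathcal{S}',\varphi)$ that is minimal (by clause~(2) of the minimality condition) and of strictly smaller defeasible nesting depth; by the induction hypothesis it is derivable, so its antecedent support $\mathcal{S}'$ is read off from a closure with the single test $\neg\varphi?$, and the defeasible extension rule adds $(\{(\mathcal{S}',\varphi\leadsto\psi')\},\psi')$ to every node. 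Having added the consequents for all defeasible elements of $\mathcal{S}$, the conclusion $\psi$ now follows deductively from $\hat{\mathcal{S}}$, and one final application of the Corollary (with test $\neg\psi?$) extracts $A$ itself.

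For the correctness direction (derivable $\Rightarrow$ argument), I would induct instead on the number of defeasible extension steps used to build the tableau. With no such steps, any support for a closure carrying a single test $\neg\psi?$ gives, by the Corollary, $\mathcal{S}\backslash\neg\psi?\vdash\psi$, so $(\mathcal{S}\backslash\neg\psi?,\psi)$ is an argument by the base and deductive clauses of Definition~\ref{def:arguments}. Each defeasible extension step introduces $(\{(\mathcal{S},\varphi\leadsto\psi)\},\psi)$ from a support $\mathcal{S}$ for the antecedent $\varphi$ obtained using strictly fewer steps; by the induction hypothesis $(\mathcal{S},\varphi)$ is already an argument, so the third clause of Definition~\ref{def:arguments} certifies the new element, and any further deductive combination is again covered by the base case.

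The main obstacle I anticipate is bookkeeping rather than a single hard inequality: one must line up the tree-shaped, simultaneously-constructed support of Definition~\ref{def:arguments} with the strictly iterative tableau process, choosing a well-founded measure (defeasible nesting depth, or rule count) that decreases correctly even though consequents added by one extension may feed the antecedents of later rules. Two finer points need care: verifying that the minimality conditions on $A$ match the minimal-inconsistent-set and minimal-support hypotheses required by Proposition~\ref{prop:tableau-with-rules-closure} and the Corollary, and handling the predicate-logic case, where the full set of ground-instance tests and a saturated leaf may be infinite, so the argument must exhibit a finite closed sub-tableau rather than appeal to complete saturation.
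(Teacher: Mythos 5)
Your proposal is correct and follows essentially the same route as the paper: a structural induction on the recursive construction of arguments, using Proposition~\ref{prop:tableau-with-rules-closure} (via the single-test closure device) for the deductive clauses and the defeasible extension rule for the nested supporting elements. The paper merely phrases the induction as ``on the construction of an argument'' and interleaves the two directions within each case, whereas you separate them and make the well-founded measure explicit; the mathematical content is the same.
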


\begin{proof}
	We prove the theorem by induction on the construction of an argument. 
	
	\noindent\textit{Initialization step:} 
	Let $\sigma \in \Sigma$. Clearly, $A=(\{\sigma\},\sigma)$ is an argument according to Definition \ref{def:arguments} iff the tableau with test $(\{ \neg \sigma?\},\neg \sigma)$ closes with support $\{ \sigma, \neg \sigma? \}$.
	
	\noindent\textit{Induction step:}
	\begin{itemize}
		\item 
		Let $A = (\mathcal{S}, \varphi)$ with $\hat{\mathcal{S}} \vdash \varphi$ be a minimal argument according to Definition \ref{def:arguments}. Then, $\hat{\mathcal{S}} \cup \{ \neg \varphi \}$ is a minimal inconsistent set. Therefore, according to Proposition \ref{prop:tableau-with-rules-closure}, $\mathcal{S} \cup \{(\{\neg\varphi?\},\neg\varphi)\}$ supports a tableau closure, and $A = (\mathcal{S}, \varphi)$ can be derived by an argumentation tableau. 
		
		Let $A = (\mathcal{S}, \varphi)$ be an argument that can be derived by an argumentation tableau. Then $\mathcal{S} \cup \{(\{\neg\varphi?\},\neg\varphi)\}$ supports a tableau closure, and according to Proposition \ref{prop:tableau-with-rules-closure}, $\hat{\mathcal{S}} \vdash \varphi$. So, $A = (\mathcal{S}, \varphi)$ is an argument according to Definition \ref{def:arguments}.
		\item
		Let $A = (\{(\mathcal{S},\varphi \leadsto \psi) \}, \psi)$ be a minimal argument according to Definition \ref{def:arguments}. Then $\hat{\mathcal{S}} \vdash \varphi$ and there exists an argument $A' = (\mathcal{S}, \varphi)$. According to the previous item, $\mathcal{S} \cup \{(\{\neg\varphi?\},\neg\varphi)\}$ supports a tableau closure. Therefore, $(\{(\mathcal{S},\varphi \leadsto \psi) \}, \psi)$ can be added to the root of the tableau. Hence,  $A = (\{(\mathcal{S},\varphi \leadsto \psi) \}, \psi)$ is an argument that can be derived by an argumentation tableau.
		
		Let $A = (\{(\mathcal{S},\varphi \leadsto \psi) \}, \psi)$ be an argument that can be derived by an argumentation tableau. Then $\mathcal{S} \cup \{(\{\neg\varphi?\},\neg\varphi)\}$ supports a tableau closure. So, $\hat{\mathcal{S}} \vdash \varphi$, and $A = (\{(\mathcal{S},\varphi \leadsto \psi) \}, \psi)$ is an argument according to Definition \ref{def:arguments}. 
		\hspace*{\fill} $\Box$
	\end{itemize}
\end{proof}

\section{Reasoning by Cases} \label{sec:rbc}
Reasoning by cases addresses the derivation of conclusions in the context of uncertainty. Uncertainty described by disjunctions results in multiple cases. Each case is a possible description of the world. If the same conclusion is derived in each case, then that conclusion will certainly hold in the case describing the world. The use of defeasible rules to derive new conclusions in a case should make no difference despite that the arguments supporting the conclusions might defeat other arguments. 

\subsection{Cases in an argumentation tableau}
If we ignore the \emph{tests} that we add to the root of an argumentation tableau, then the construction of a tableau can be viewed as the construction of all cases implied by the available information. Ignoring the tests, each open branch describes one case implied by the available disjunctive information. If a case describes the world, additional information may eliminate all other cases and a defeasible rule should be applied as described in the previous section.

The use of defeasible rules in a case implies that we should extend a leaf of the argumentation tableau with the consequence of a defeasible rule whenever the leaf entails the antecedent this rule. We cannot test whether a leaf entails the antecedent of a defeasible rule by adding the antecedent as a test to the root of the tableau. We should add the antecedent to the leaf. Preferably the leaf is saturated because a possibly successful test may fail if we add it too early. To give an illustration, consider $\Sigma = \{ p \vee q \}$ and $D = \{ p \leadsto r, q \leadsto r \}$. If we add the tests $(\{\neg p?\},\neg p)$ and $(\{\neg q?\},\neg q)$ to the root of the tableau, both tests will fail because there is no support for a tableau closure with only one test. If however we first rewrite $p \vee q$ and then add the tests to the resulting leafs, in each branch we will derive a support for a closure that enables us to add the consequence of the corresponding rule. The two cases are illustrated by the two tableaux in Figure \ref{fig:r.b.c}.

\begin{figure}
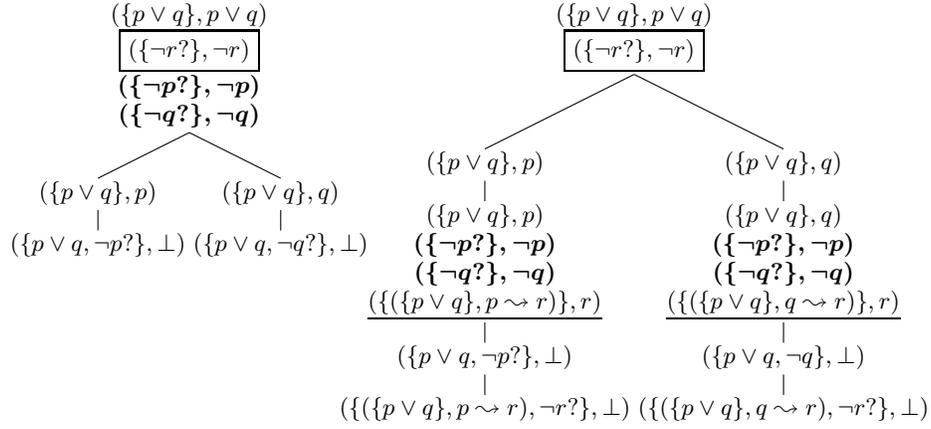

\begin{tabular}{cc}
\mbox{
\footnotesize
\hspace*{-3mm}
\Tree
[.{$(\{ p \vee q \}, p \vee q)$ \\
	\fbox{$(\{\neg r? \}, \neg r)$} \\
	$\boldsymbol{(\{\neg p? \}, \neg p)}$ \\
	$\boldsymbol{(\{\neg q? \}, \neg q)}$}
	[.{$(\{ p \vee q \}, p )$}
		[.{$(\{ p \vee q, \neg p? \}, \bot )$}
		]
	]
	[.{$(\{ p \vee q \}, q)$}
		[.{$(\{ p \vee q, \neg q? \}, \bot)$}
		]
	]
] }
&
\mbox{
\footnotesize
\hspace*{-7mm}
\Tree
[.{$(\{ p \vee q \}, p \vee q)$ \\
	\fbox{$(\{\neg r? \}, \neg r)$} }
	[.{$(\{ p \vee q \}, p )$}
		[.{$(\{ p \vee q \}, p )$ \\
			$\boldsymbol{(\{\neg p? \}, \neg p)}$ \\
			$\boldsymbol{(\{\neg q? \}, \neg q)}$ \\
			$\underline{(\{ (\{ p \vee q \},p \leadsto r) \}, r )}$}
			[.{$(\{ p \vee q, \neg p? \}, \bot )$}
				[.{$(\{ (\{ p \vee q \},p \leadsto r), \neg r? \}, \bot )$}
				]
			]
		]
	]
	[.{$(\{ p \vee q \}, q )$}
		[.{$(\{ p \vee q \}, q )$ \\
			$\boldsymbol{(\{\neg p? \}, \neg p)}$ \\
			$\boldsymbol{(\{\neg q? \}, \neg q)}$ \\
			$\underline{(\{ (\{ p \vee q \},q \leadsto r) \}, r )}$}
			[.{$(\{ p \vee q, \neg q \}, \bot )$}
				[.{$(\{ (\{ p \vee q \},q \leadsto r), \neg r? \}, \bot )$}
				]
			]
		]
	]
] }
\end{tabular}
\normalsize
\caption{Reasoning by cases.}
\label{fig:r.b.c}
\end{figure}

The example illustrates that adding the tests is a strategic choice, which can be dealt with through search. We add a test for the negated antecedent of a rule to a current leaf and try to close all  resulting branches starting from the leaf. If we cannot close all these branches, we backtrack to the leaf and remove the test. Using such a search process is of course not a very efficient solution.

Instead of adding tests for the antecedents of defeasible rules, we can check whether the current leaf of a branch of a tableau entails the antecedent. This works fine for propositional logic but raises a problem for predicate logic. If the antecedent of a rule contains a universal claim; i.e., a universally quantified proposition that must be true or an existentially quantified proposition that must be false,  then entailment is not decidable because we do not know all the objects over which we have to quantify. So, we should restrict the defeasible rules to those that do not contain universal claims in the antecedent. This restriction implies that we cannot state that \emph{a Student that Passes all Exams normally receives a Diploma}: $S(x) \wedge \forall [E(y) \to P(x,y)] \leadsto D(x)$. This even holds if the exams have been specified explicitly: $\forall y [E(y) \leftrightarrow y =e_1 \vee \cdots \vee y=e_n]$.

A possible solution for this restriction is a first order logic that uses binary quantifiers in combination with a special specification of the ground terms for which a predicate is true: $E = \{ e_1 \ldots e_n\}$ and $S(x) \wedge \forall E(y) [P(x,y)] \leadsto D(x)$.
However, if we wish to stay in the domain of standard predicate logic, we should rely on the above described search process. 

\subsection{How to reason by cases with defeasible information}
There have been a few proposals how to introduce reasoning by cases in argumentation systems \cite{BHS-17,Bodanza02,Pol-92,Roo-00}. Unfortunately, there is no consensus on the correct conclusion(s) when reasoning by cases using defeasible information. Here, we propose that \emph{the (defeasible) conclusions supported in a case by defeasible information must be the same as when uncertainty is eliminated by additional information}. This principle implies that we only eliminate alternative cases (through additional information) in which the antecedent of a defeasible rule does not hold. Note that a case can therefore have sub-cases. To give an illustration, consider the information $\Sigma = \{ \neg(p \wedge q), r \vee s, t \}$ and the  defeasible rules $D = \{ r \leadsto p, t \leadsto q \}$. The defeasible rule $r \leadsto p$ is applicable in the case $\{ \neg(p \wedge q), r, t \}$. This case has two sub-cases, $\{ \neg p, r, t \}$ and $\{ \neg q, r, t \}$. An inconsistency can be derived in the case $\{ \neg(p \wedge q), r, t \}$ and the set of last rules involved in the inconsistency is: $\{ r \leadsto p, t \leadsto q \}$.


Before addressing the technical details of reasoning be cases in using an argumentation tableau, we will first briefly review proposals made in the literature.
\begin{itemize}
\item
Pollock's argumentation system OSCAR \shortcite{Pol-91,Pol-92}
is an example of an argumentation system that allows for
suppositional reasoning, and is therefore capable of reasoning by
cases. Pollock does not explicitly discuss which conclusions
should be supported when using reasoning by cases with defeasible
rules. His definition of rebutting attack \cite{Pol-91} implies
that a suppositional argument can only
be defeated by (1) suppositional arguments of the same case, and (2) by arguments that do not depend on the considered cases. A suppositional argument cannot defeat an argument that does not depend on any case. 
As argued in \cite{Roo-00}, this restriction may result in incorrect conclusions.
\item
Bodanza \shortcite{Bodanza02} adapts OSCAR by allowing that a suppositional argument defeats an argument that does not depend on any case. However, Bodanza  changes the interpretation of the $\neg$-operator. $\neg \alpha$ is interpreted as: ``$\alpha$ is not an alternative'' when reasoning by cases.
\item
Recently, the framework for structured argumentation ASPIC$^+$ \cite{MP-14,Prakken10} has been
extended in order to enable reasoning by cases \cite{BHS-17}. The
authors introduce hypothetical sub-arguments to handle the cases.
An argument can attack a hypothetical sub-argument but not vice
versa. Hypothetical sub-arguments can only attack other
hypothetical sub-arguments. 
\end{itemize}

The first and the last approach above result in counter-intuitive conclusions in the following example. 
\begin{quote}
Harry and Draco are involved in a fight and therefore are punishable.
However, if someone involved in a fight, acted in self-defense,
then he or she is not punishable. Witnesses state that either
Harry or Draco acted in self-defense. 
\end{quote}
The first and last approach above support the conclusion that both Harry and Draco are punishable, while we would expect that only one of them is punishable.
Our proposal that \emph{conclusions supported in a case by defeasible information must be the same as when uncertainty is eliminated by additional information} avoids the counter-intuitive conclusion. However, it introduces a technical issue, which will be discussed in the next subsection.

\subsection{Local tableau closures}
Reconsider the above example with information $\Sigma = \{ \neg(p \wedge q), r \vee s, t \}$ and  defeasible rules $D = \{ r \leadsto p, t \leadsto q \}$. We can use the information and the rules to construct the tableau in Figure \ref{fig:local-closure}. If we eliminate the right most branch by adding the information $\neg s$, we get a tableau as described in Section \ref{sec:defeasible-rules}, and the set of last rules for the derived inconsistency is: $\{ r \leadsto p, t \leadsto q \}$. It is not difficult to determine the same inconsistency in the tableau in Figure \ref{fig:local-closure}.

\begin{figure}
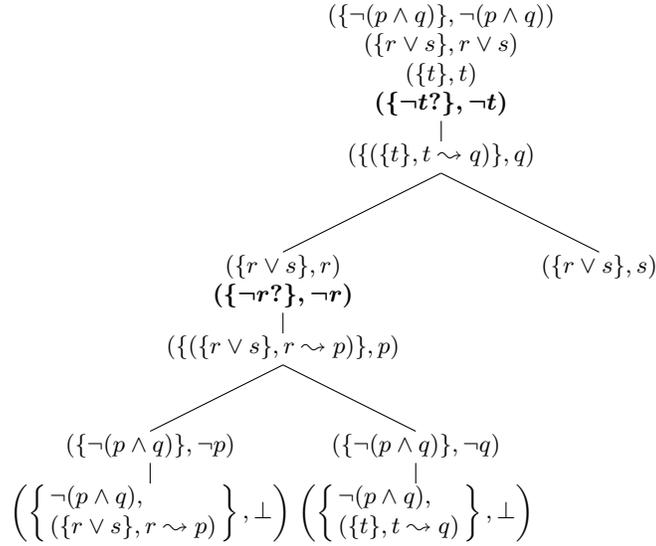

\begin{center}
\mbox{
\footnotesize
\Tree
[.{$(\{\neg(p \wedge q)\},\neg(p \wedge q))$ \\
	$(\{ r \vee s \}, r \vee s)$ \\
	$(\{ t \}, t)$ \\
	$\boldsymbol{(\{\neg t? \}, \neg t)}$}
	[.{$(\{ (\{ t \}, t \leadsto q) \}, q)$}
		[.{$(\{ r \vee s \}, r)$ \\
			$\boldsymbol{(\{\neg r? \}, \neg r)}$}
			[.{$(\{ (\{ r \vee s \}, r \leadsto p) \}, p)$}
				[.{$(\{\neg(p \wedge q)\},\neg p)$}
					[.{$\left(\left\{ \begin{array}{l} \neg(p \wedge q), \\ (\{ r \vee s \}, r \leadsto p) \end{array} \right\}, \bot \right)$}
					]
				]
				[.{$(\{\neg(p \wedge q)\},\neg q)$}
					[.{$\left(\left\{ \begin{array}{l} \neg(p \wedge q), \\ (\{ t \}, t \leadsto q) \end{array} \right\}, \bot \right)$}
					]
				]
			]
		]
		[.{$(\{ r \vee s \}, s)$}
		]
	]
] }
\end{center}
\normalsize
\caption{Local tableau closure 1.}
\label{fig:local-closure}
\end{figure}

It is also possible to construct the tableau in Figure \ref{fig:local-closure2} using the same information. Here, it is more difficult to determine the set of last rules involved in the inconsistent case.

\begin{figure}
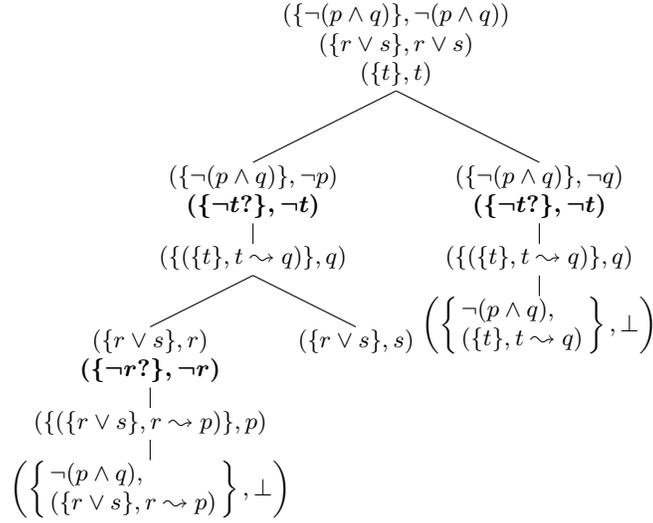

\begin{center}
\mbox{
\footnotesize
\Tree
[.{$(\{\neg(p \wedge q)\},\neg(p \wedge q))$ \\
	$(\{ r \vee s \}, r \vee s)$ \\
	$(\{ t \}, t)$}
	[.{$(\{\neg(p \wedge q)\},\neg p)$ \\
		$\boldsymbol{(\{\neg t? \}, \neg t)}$}
		[.{$(\{ (\{ t \}, t \leadsto q) \}, q)$}
			[.{$(\{ r \vee s \}, r)$ \\
				{$\boldsymbol{(\{\neg r? \}, \neg r)}$}}
				[.{$(\{ (\{ r \vee s \}, r \leadsto p) \}, p)$}
					[.{$\left(\left\{ \begin{array}{l} \neg(p \wedge q), \\ (\{ r \vee s \}, r \leadsto p) \end{array} \right\}, \bot \right)$}
					]
				]
			]
			[.{$(\{ r \vee s \}, s)$}
			]
		]
	]
	[.{$(\{\neg(p \wedge q)\},\neg q)$ \\
		$\boldsymbol{(\{\neg t? \}, \neg t)}$}
		[.{$(\{ (\{ t \}, t \leadsto q) \}, q)$}
			[.{$\left(\left\{ \begin{array}{l} \neg(p \wedge q), \\ (\{ t \}, t \leadsto q) \end{array} \right\}, \bot \right)$}
			]
		]
	]
] }
\end{center}
\normalsize
\caption{Local tableau closure 2.}
\label{fig:local-closure2}
\end{figure}

The proposition $r \vee s$ in the above example specifies two cases: $r$ and $s$. If we eliminates the case $s$, we get an argumentation tableau as described in the previous section. Eliminating the case corresponds to eliminating the right most branch in Figure \ref{fig:local-closure} and corresponds to eliminating the middle branch in Figure \ref{fig:local-closure2}. All remaining branches are closed, implying that the case $r$ results in a closure. Such a closure of a case will be called a \emph{local tableau closure}.

The key to identify an inconsistent case, i.e., a local tableau closure, is by checking whether all alternatives implied by the propositions $\hat{\mathcal{S}}$ of a closed branch with support $\mathcal{S}$ for the closure, are also closed. Consider the closed left branch in Figures \ref{fig:local-closure} and \ref{fig:local-closure2}. The support $\mathcal{S} = \{ \neg(p \wedge q), (\{ r \vee s \}, r \leadsto p) \}$ for the closure is based on one of the two cases implied by $\neg(p \wedge q)$, namely the case in which $\neg p$ holds. It is possible that the other case in which $\neg q$ holds, is consistent. The case $r$ implied by $r \vee s$ can only be inconsistent if both sub-case $p$ and $q$ are inconsistent.

To determine whether a case is inconsistent; i.e., whether we have a local tableau closure, we need to consider all cases implied by a set of propositions $\hat{\mathcal{S}}$ where $\mathcal{S}$ is the support of a branch closure. Since these cases can be spread over the whole tableau, we will propagate the support for branch closures towards the root of the tableau. Cases are the result of applying tableau rules that create more than one child node. We can therefore combine the supports for closures of sub-cases at nodes with more than one child node while propagating the supports for branch closures towards the root. This procedure enables us to check for the propositions involved in a leaf closure whether all cases implied by these propositions are closed.

\begin{definition} \label{def:local-closure}
	Let $\mathcal{T}$ be an argumentation tableau with root $\Gamma_0$ and with leaf nodes: $\Lambda_1, \ldots, \Lambda_n$. Moreover, let $\Lambda_{i_1}, \ldots, \Lambda_{i_k}$ be  the closed leaf nodes. We propagate the support for the closure of a leaf toward the root of the tableau.
	\begin{itemize}
		\item
		If the argument $(\mathcal{S},\eta)$ was rewritten in a node $\Gamma$ and resulted in one child node $\Gamma'$, then add every $(\mathcal{S},\bot) \in \Gamma'$ to $\Gamma$.
		\item
		If the argument $(\mathcal{S},\eta)$ was rewritten in a node $\Gamma$ and resulted in more than one child node $\Gamma_1,\ldots,\Gamma_m$, then add every $(\bigcup_{i=1}^m \mathcal{S}_i, \bot)$ with $(\mathcal{S}_i,\bot) \in \Gamma_i$ and $\mathcal{S} \subseteq \mathcal{S}_i$, to $\Gamma$.
		\item
		If the argument $(\mathcal{S},\eta)$ was rewritten in a node $\Gamma$ and resulted in more than one child node $\Gamma_1,\ldots,\Gamma_m$, then add every $(\mathcal{S}_i,\bot) \in \Gamma_i$ such that $\mathcal{S} \not\subseteq \mathcal{S}_i$, to $\Gamma$.
	\end{itemize}
	Every  $(\mathcal{S},\bot) \in \Gamma_0$ represents a local tableau closure.
\end{definition}

When we apply the procedure in this definition to the above example, we get the tableau shown in Figure \ref{fig:support-local-closure}. The tableau supports the local closure that we expect. 

\begin{figure}
\scriptsize\noindent
\Tree
[.{$(\{\neg(p \wedge q)\},\neg(p \wedge q))$ \\
	$(\{ r \vee s \}, r \vee s)$ \\
	$(\{ t \}, t)$ \\
	$\left(\left\{ \begin{array}{l} \neg(p \wedge q), \\ (\{ r \vee s \}, r \leadsto p), \\ (\{ t \}, t \leadsto q) \end{array} \right\}, \bot \right)$}
	[.{$(\{\neg(p \wedge q)\},\neg p)$ \\
		$\boldsymbol{(\{\neg t? \}, \neg t)}$ \\
		$\left(\left\{ \begin{array}{l} \neg(p \wedge q), \\ (\{ r \vee s \}, r \leadsto p) \end{array} \right\}, \bot \right)$}
		[.{$(\{ (\{ t \}, t \leadsto q) \}, q)$ \\
			$\left(\left\{ \begin{array}{l} \neg(p \wedge q), \\ (\{ r \vee s \}, r \leadsto p) \end{array} \right\}, \bot \right)$}
			[.{$(\{ r \vee s \}, r)$ \\
				$\boldsymbol{(\{\neg r? \}, \neg r)}$ \\
				$\left(\left\{ \begin{array}{l} \neg(p \wedge q), \\ (\{ r \vee s \}, r \leadsto p) \end{array} \right\}, \bot \right)$}
				[.{$(\{ (\{ r \vee s \}, r \leadsto p) \}, p)$ \\
					$\left(\left\{ \begin{array}{l} \neg(p \wedge q), \\ (\{ r \vee s \}, r \leadsto p) \end{array} \right\}, \bot \right)$}
					[.{$\left(\left\{ \begin{array}{l} \neg(p \wedge q), \\ (\{ r \vee s \}, r \leadsto p) \end{array} \right\}, \bot \right)$}
					]
				]
			]
			[.{$(\{ r \vee s \}, s)$}
			]
		]
	]
	[.{$(\{\neg(p \wedge q)\},\neg q)$ \\
		$\boldsymbol{(\{\neg t? \}, \neg t)}$ \\
		$\left(\left\{ \begin{array}{l} \neg(p \wedge q), \\ (\{ t \}, t \leadsto q) \end{array} \right\}, \bot \right)$}
		[.{$(\{ (\{ t \}, t \leadsto q) \}, q)$ \\
			$\left(\left\{ \begin{array}{l} \neg(p \wedge q), \\ (\{ t \}, t \leadsto q) \end{array} \right\}, \bot \right)$}
			[.{$\left(\left\{ \begin{array}{l} \neg(p \wedge q), \\ (\{ t \}, t \leadsto q) \end{array} \right\}, \bot \right)$}
			]
		]
	]
] 
\normalsize
\caption{The support for a local tableau closure.}
\label{fig:support-local-closure}
\end{figure}

We can prove that Definition \ref{def:local-closure} guarantees that supports for local closures represent inconsistent cases.

\begin{proposition} \label{prop:local-closure}
	If $\mathcal{S}$ is the support for the local closures of a tableau, then $\hat{\mathcal{S}} \vdash \bot$.
\end{proposition}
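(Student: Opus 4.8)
The plan is to prove, by bottom-up induction over the tableau (from the leaves toward the root $\Gamma_0$), the following strengthened statement: whenever the propagation of Definition~\ref{def:local-closure} places an argument $(\mathcal{S},\bot)$ in a node $\Gamma$, the subtableau rooted at $\Gamma$, after it is restricted to those arguments whose support is contained in $\mathcal{S}$, is closed, i.e.\ every leaf of this restricted subtableau carries some $(\mathcal{S}',\bot)$ with $\mathcal{S}'\subseteq\mathcal{S}$. This restriction is exactly the one used in the proof of Proposition~\ref{prop:tableau-closure}: deleting every argument $(\mathcal{S}',\sigma)$ with $\mathcal{S}'\not\subseteq\mathcal{S}$ again yields a sound tableau, with some nodes acquiring children all equal to themselves. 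Because the consequent of each supporting element $(\mathcal{S}',\varphi\leadsto\psi)\in\mathcal{S}$ is introduced at the root, the restricted root carries precisely the arguments whose conclusions form $\hat{\mathcal{S}}$; so once the statement is established, applying it at $\Gamma_0$ and invoking the soundness of the semantic tableau (the fundamental property recalled in the proof of Proposition~\ref{prop:tableau-closure}) gives that $\hat{\mathcal{S}}$ is unsatisfiable, i.e.\ $\hat{\mathcal{S}}\vdash\bot$. As in Proposition~\ref{prop:tableau-with-rules-closure}, since $\hat{\mathcal{S}}\subseteq\mathcal{L}$ the defeasible rules add nothing essential beyond treating each rule consequent as a root proposition. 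Note that only this single (soundness) direction is needed.

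The base case is a leaf $\Lambda$ in which $(\mathcal{S},\bot)$ is produced by the closure rule from $(\mathcal{S}_1,\varphi)$ and $(\mathcal{S}_2,\neg\varphi)$ with $\mathcal{S}=\mathcal{S}_1\cup\mathcal{S}_2$; both arguments survive the restriction to $\mathcal{S}$, so the restricted one-node subtableau is closed. For the induction step I would treat the three bullets of Definition~\ref{def:local-closure} in turn. For a non-branching rewrite with single child $\Gamma'$ (first bullet), the restricted subtableau at $\Gamma$ is $\Gamma$ together with the restricted subtableau at $\Gamma'$, which closes by the induction hypothesis, and the semantic equivalence of a node and its unique child keeps everything sound. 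For a branching rewrite of $(\mathcal{S}_\eta,\eta)$ into children $\Gamma_1,\dots,\Gamma_m$ the two remaining bullets split according to whether the branch closures depend on the splitting formula $\eta$.

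The crux of the argument is this branching case, and it is where the side conditions $\mathcal{S}_\eta\subseteq\mathcal{S}_i$ and $\mathcal{S}_\eta\not\subseteq\mathcal{S}_i$ do the real work. When we form $(\bigcup_i\mathcal{S}_i,\bot)$ from child closures $(\mathcal{S}_i,\bot)$ all satisfying $\mathcal{S}_\eta\subseteq\mathcal{S}_i$ (second bullet), each restricted subtableau at $\Gamma_i$ closes for $\mathcal{S}_i$, hence \emph{a fortiori} for the larger support $\bigcup_i\mathcal{S}_i$; moreover $\mathcal{S}_\eta\subseteq\bigcup_i\mathcal{S}_i$ guarantees that $(\mathcal{S}_\eta,\eta)$ survives the restriction, so the restricted node $\Gamma$ genuinely performs the branching into $\Gamma_1,\dots,\Gamma_m$ and all its branches are closed. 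This is precisely the step that fuses the inconsistent sub-cases into one inconsistent case. Conversely, when a child closure satisfies $\mathcal{S}_\eta\not\subseteq\mathcal{S}_i$ (third bullet), the splitting argument $(\mathcal{S}_\eta,\eta)$ is deleted by the restriction to $\mathcal{S}_i$, the child $\Gamma_i$ collapses onto $\Gamma$ in the restricted tableau, and the restricted subtableau at $\Gamma$ coincides with that at $\Gamma_i$, so it is closed without needing the sibling branches. I expect this bookkeeping to be the only delicate point: verifying that ``$\eta$ is present in the restriction'' is equivalent to ``$\mathcal{S}_\eta\subseteq$ the current support'', and that the $\hat{\cdot}$-operator commutes with the combination $\bigcup_i\mathcal{S}_i$ so that the fused support still denotes the union of the sub-case conclusions. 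The semantic soundness at each individual rule and the final appeal to Proposition~\ref{prop:tableau-closure} are then routine.
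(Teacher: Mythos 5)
Your proposal is correct and follows essentially the same route as the paper's proof: restrict the tableau to arguments whose support is contained in $\mathcal{S}$, discard the sibling branches introduced at third-bullet propagation steps, treat the rule consequents occurring in $\mathcal{S}$ as root propositions, and invoke the soundness of the standard semantic tableau as in Proposition~\ref{prop:tableau-closure}. The only difference is organizational: you verify closure of the restricted tableau by a leaf-to-root induction on the propagation of Definition~\ref{def:local-closure}, whereas the paper performs the pruning and restriction as a single global transformation and then asserts the closure directly.
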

\begin{proof}
	Let $(\mathcal{S}_1,\bot), \ldots, (\mathcal{S}_k,\bot)$ be the closures of the leafs $\Lambda_1, \ldots, \Lambda_k$ that resulted in the support $\mathcal{S}$ according to Definition \ref{def:local-closure}. Consider the propagation of $(\mathcal{S}_1,\bot), \ldots, (\mathcal{S}_k,\bot)$ towards the root of the tableau. 
	\begin{itemize}
		\item
		Each time the third item of Definition \ref{def:local-closure} was applied, remove all branches except for the current branch over which we propagate the closure. The removed side branches do not contribute to the support $\mathcal{S}$. 
		\item
		Next, remove from  all nodes, the elements $(\mathcal{S}',\eta)$ for which $\mathcal{S}' \not\subseteq \mathcal{S}$. 
		\item
		Finally, add $\{ (\{ (\mathcal{S}', \varphi \leadsto \psi)\},\psi) \mid (\mathcal{S}', \varphi \leadsto \psi) \in \mathcal{S}\}$ to every node of the tableaux  to get a proper argumentation tableau. Note that some nodes $\Gamma$ may have children that are all equal to $\Gamma$. 
	\end{itemize}
	The following holds for the resulting tableau $\mathcal{T}'$:
	\begin{itemize}
		\item 
		The tableau $\mathcal{T}'$ has a root $\Gamma_0 = \{ (\{ \sigma \}, \sigma) \mid \sigma \in \mathcal{S}\cap \mathcal{L} \} \cup \{ (\{ (\mathcal{S}', \varphi \leadsto \psi)\},\psi) \mid (\mathcal{S}', \varphi \leadsto \psi) \in \mathcal{S}\}$.
		\item 
		There is an interpretation entailing the root iff it entails every node on a branch from the root to a leaf.
		\item 
		The tableau $\mathcal{T}'$ still closes with $\mathcal{S}$ being the support of a tableau closure according to Definition \ref{def:closure}.
	\end{itemize}
	Hence, $\hat{\mathcal{S}}$ is an inconsistent set of propositions.
	\hspace*{\fill} $\Box$
\end{proof}

We can also prove that inconsistent cases can be identified through supports for local tableau closures.

\begin{proposition} \ \\
Let $\{ (\{\sigma_1\},\sigma_1), \ldots, (\{\sigma_m\},\sigma_m), (\mathcal{S}_1,\eta_1 \leadsto \mu_1),\ldots,(\mathcal{S}_n,\eta_n \leadsto \mu_n) \}$ be a minimal inconsistent case. \\
Then $\mathcal{S} = \{ (\{\sigma_1\},\sigma_1), \ldots, (\{\sigma_m\},\sigma_m), (\mathcal{S}_1,\eta_1 \leadsto \mu_1),\ldots,(\mathcal{S}_n,\eta_n \leadsto \mu_n) \}$ is a support for a local closure.
\end{proposition}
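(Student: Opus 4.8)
The plan is to prove this statement as the completeness converse of Proposition~\ref{prop:local-closure}: given the minimal inconsistent case $\mathcal{S}$, I would construct an argumentation tableau in which the propagation of Definition~\ref{def:local-closure} returns exactly $(\mathcal{S},\bot)$ at the root. I would argue by induction on the number $n$ of defeasible-rule applications occurring in $\mathcal{S}$, so that the base case collapses onto the completeness already available for the purely deductive tableau. Throughout I write $\hat{\mathcal{S}}=\{\sigma_1,\ldots,\sigma_m,\mu_1,\ldots,\mu_n\}$ for the supported propositions, which by assumption form a minimal inconsistent set.

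For the base case $n=0$ we have $\hat{\mathcal{S}}=\{\sigma_1,\ldots,\sigma_m\}\subseteq\mathcal{L}$. By Proposition~\ref{prop:tableau-closure}(2) there is a tableau with root $\{(\{\sigma_j\},\sigma_j)\}_{j=1}^m$ that closes globally with support $\mathcal{S}$, and I would pass to the reduced tableau in which every argument has support contained in $\mathcal{S}$, so that every branching splits a disjunction whose support lies in $\mathcal{S}$. Running Definition~\ref{def:local-closure} on this tableau, one-child nodes copy the support (first clause) and each branching node is recombined by the second clause, since the split proposition belongs to the support of both child closures; because every leaf closes, the propagation delivers $(\mathcal{S},\bot)$ at the root, i.e.\ a local closure.

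For the induction step I would single out an \emph{innermost} rule application $(\mathcal{S}_i,\eta_i\leadsto\mu_i)$ — one whose antecedent support $\mathcal{S}_i$ consists solely of premises — and pass to $\mathcal{S}^- = (\mathcal{S}\setminus\{(\mathcal{S}_i,\eta_i\leadsto\mu_i)\})\cup\{(\{\mu_i\},\mu_i)\}$, which demotes $\mu_i$ to a fresh premise. Since $\hat{\mathcal{S}^-}=\hat{\mathcal{S}}$, the set $\mathcal{S}^-$ is again a minimal inconsistent case, now with $n-1$ rule applications, so the induction hypothesis furnishes a tableau $\mathcal{T}^-$ whose local-closure support is $\mathcal{S}^-$. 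I would then build a new tableau whose root carries the premises of $\mathcal{S}^-$ together with those of $\mathcal{S}_i$, split the disjunctions of $\mathcal{S}_i$ so as to reach a sub-case in which $\eta_i$ is locally derivable with support exactly $\mathcal{S}_i$ (a purely deductive step inside the branch, underwritten by Proposition~\ref{prop:tableau-closure} since $i$ is innermost), apply the rule to obtain the argument $(\{(\mathcal{S}_i,\eta_i\leadsto\mu_i)\},\mu_i)$, and splice the entire $\mathcal{T}^-$ derivation into this sub-case with the rule-derived $\mu_i$ in place of the root premise $\mu_i$. The alternative sub-cases, in which $\eta_i$ fails, are left open. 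Propagating anew, every disjunction consumed to derive $\eta_i$ has been absorbed into the support of the rule application, so at those splits the bare disjunction is absent from the child closure and the third clause of Definition~\ref{def:local-closure} discards the open alternatives; the surviving support is $\mathcal{S}^-$ with the rule application $(\mathcal{S}_i,\eta_i\leadsto\mu_i)$ in place of the premise $\mu_i$, that is, $\mathcal{S}$. Minimality of $\hat{\mathcal{S}}$, via Proposition~\ref{prop:tableau-closure}(1) as at the close of the proof of Proposition~\ref{prop:tableau-closure}(2), excludes any strictly smaller surviving support. The predicate-logic case is analogous, passing to ground instances exactly as in Proposition~\ref{prop:tableau-with-rules-closure}.

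The main obstacle is that, unlike in Definition~\ref{def:arguments}, the antecedent support $\mathcal{S}_i$ of a rule used inside a case need not satisfy $\hat{\mathcal{S}_i}\vdash\eta_i$ globally; it is only a branch-relative derivation, as witnessed by the element $(\{r\vee s\},r\leadsto p)$ of the running example, where $\{r\vee s\}$ does not entail $r$. Consequently the earlier completeness results cannot be quoted verbatim for the antecedents, and the delicate point is to verify that the two branching clauses of Definition~\ref{def:local-closure} cooperate correctly: a split on a disjunction that survives as a bare proposition in the support (the sub-cases $\neg p$ and $\neg q$ of $\neg(p\wedge q)$) must be recombined by the second clause so that both sub-cases contribute, whereas a split whose disjunction has been absorbed into a rule-application support (the alternative case $s$ of $r\vee s$) must be passed up separately by the third clause so that the case in which the antecedent fails does not block the closure. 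Confirming that exactly $\mathcal{S}$ emerges from this propagation, and matching the two clauses to these two situations, is where the real work of the proof lies.
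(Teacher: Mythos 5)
Your proposal reaches the right conclusion but by a genuinely different and considerably heavier route than the paper. The paper's proof is direct and avoids induction altogether: since $\hat{\mathcal{S}}=\{\sigma_1,\ldots,\sigma_m,\mu_1,\ldots,\mu_n\}$ is (minimally) inconsistent, every branch containing the elements of $\mathcal{S}$ can be closed by extending the tableau, and each branch-closure support is a subset of $\mathcal{S}$; the propagation of Definition \ref{def:local-closure} then succeeds because all sub-cases close, yielding some support $\mathcal{S}'$ at the root; finally, if $\mathcal{S}'\subsetneq\mathcal{S}$ then Proposition \ref{prop:local-closure} would make $\hat{\mathcal{S}}'$ inconsistent, contradicting minimality, so $\mathcal{S}'=\mathcal{S}$. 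Your induction on the number of rule applications, with the demotion of an innermost $(\mathcal{S}_i,\eta_i\leadsto\mu_i)$ to a premise $(\{\mu_i\},\mu_i)$ and the splicing of $\mathcal{T}^-$ into the sub-case where $\eta_i$ is derived, is a legitimate alternative decomposition (the demotion preserves $\hat{\mathcal{S}}$ and hence minimal inconsistency, and tableau rules act only on the proposition, not the support, so the replay works). What it buys is a more explicit picture of how rule applications enter the support; what it costs is extra machinery, and crucially it does not let you escape the step you yourself flag as ``where the real work of the proof lies'': verifying that the second and third clauses of Definition \ref{def:local-closure} combine the closed sub-cases and discard the open alternative branches so that exactly $\mathcal{S}$ survives. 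The paper discharges that step with the observation that all sub-cases of the case are closed (so the second clause always has the closures it needs to recombine) together with the minimality argument, which caps the propagated support from below; you state the minimality cap but leave the propagation itself as an open obligation, so as written the proof is not complete at its central point.

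Two smaller remarks. First, your minimality step appeals to Proposition \ref{prop:tableau-closure}(1), which is stated only for supports $\mathcal{S}\subseteq\Sigma$; once the support contains rule applications $(\mathcal{S}_i,\eta_i\leadsto\mu_i)$ the correct tool is Proposition \ref{prop:local-closure}, which is what the paper invokes. Second, your observation that the antecedent supports $\mathcal{S}_i$ are only branch-relative (so the completeness results for global closures cannot be quoted verbatim for them) is a genuine and well-spotted subtlety that the paper's own proof also glosses over; it is the reason the third clause of Definition \ref{def:local-closure} exists, and your reading of how it handles the open $s$-branch of $r\vee s$ is correct.
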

\begin{proof}
Since $\{ \sigma_{1}, \ldots, \sigma_{m}, \mu_{1}, \ldots, \mu_{n} \}$ is a minimal inconsistent set, each branch containing an element of $\mathcal{S}$ can be closed by extending the tableau. The support for each closure of the branches is a subset of $\{ (\{\sigma_1\},\sigma_1), \ldots, (\{\sigma_m\},\sigma_m), (\mathcal{S}_1, \\ \eta_1 \leadsto \mu_1),\ldots,(\mathcal{S}_n,\eta_n \leadsto \mu_n) \}$. We can propagate the supports towards the root as specified by Definition \ref{def:local-closure}. Since all sub-cases are closed, the propagation will be successful and the root will have a support $\mathcal{S}'$ for the local closure. 
	
	Suppose that $\mathcal{S}' \not= \mathcal{S}$. Then $\mathcal{S}' \subset \mathcal{S}$ and according to Proposition \ref{prop:local-closure}, $\hat{\mathcal{S}}'$ is an inconsistent set implying that $\hat{\mathcal{S}}$ is not a minimal inconsistent set. Contradiction.
	
	Hence, $\mathcal{S}$ is a support for a local tableau closure.
	\hspace*{\fill} $\Box$
\end{proof}

\subsection{Mutually exclusive cases}
There is one last issue concerning reasoning by cases. The tableau rule $\frac{(\mathcal{S},\varphi \vee \psi)}{(\mathcal{S},\varphi) \mid (\mathcal{S},\psi)}$ does not guarantee that cases are mutually exclusive.\footnote{Note that the goal is not to define a tableau rule for an `exclusive or' but for a standard `or', which can be viewed as describing three mutually exclusive cases.}
The applying this tableau rule results in two children representing two cases. Both cases may support a conclusion $\eta$. This conclusion is not justified if $\eta$ does not hold when both $\varphi$ and $\psi$ are true. As an illustration, suppose that a party will be great if Harry or Ron will attend it, but not if both will attend (because Harry and Ron have a quarrel). 
Here, the case that Harry attends the party and whether Ron attends is unknown, is not the same as drawing a conclusion in the absence of more specific information. The disjunction implies that Ron might attend the party too. The solution to this issue is to ensure that the tableau only contains cases that are mutually exclusive.
We address this problem by adapting three tableau rules.

\begin{center}
	\begin{tabular}{cc}
		\small$\displaystyle \frac{(\mathcal{S},\varphi \vee \psi)}{(\mathcal{S},\varphi \wedge \neg \psi) \mid (\mathcal{S},\varphi \wedge \psi) \mid (\mathcal{S},\neg \varphi \wedge \psi)} $ & 
		\small$\displaystyle \frac{(\mathcal{S},\varphi \to \psi)}{(\mathcal{S},\neg \varphi \wedge \neg \psi) \mid (\mathcal{S},\neg \varphi \wedge \psi) \mid (\mathcal{S},\varphi \wedge \psi)} $ 
		\\
		\\
		\small$\displaystyle \frac{(\mathcal{S},\neg(\varphi \wedge \psi))}{(\mathcal{S},\neg\varphi \wedge \psi) \mid (\mathcal{S},\neg \varphi \wedge \neg \psi) \mid (\mathcal{S},\varphi \wedge \neg \psi)} $
	\end{tabular}
\end{center}

Using these adapted tableau rules we will consider three mutually exclusive cases given the information that Harry or Ron will attend the party. In two cases the party will be great and in one case it will not. 

\section{Conclusion} \label{sec:conclusion}
This paper investigated the possibility of using the semantic tableau method to derive arguments for claims / conclusions. We conclude that it is possible to define an argumentation tableau that provides the arguments supporting conclusions in case of propositional and predicate logic. If the initial information is inconsistent, undercutting arguments can also be derived for resolving the inconsistencies. 
We further conclude that an argumentation tableau can provide arguments supporting conclusions if propositional and predicate logic are extended with defeasible rules. Arguments for inconsistencies, covering rebutting attacks, can be resolved by deriving undercutting arguments for defeasible rules. 
Our last conclusion is that an argumentation tableau enables reasoning by cases and that conclusions supported by reasoning by cases are intuitively plausible.

Further research can be done on (\textit{i}) efficiently implementing an argumentation tableau, and (\textit{ii}) adapting the argumentation tableau to other logics.

\bibliographystyle{splncs04}
\bibliography{NMR,tableau}

\end{document}